\definecolor{myothercolor}{HTML}{D7191C}
\definecolor{mycolor}{HTML}{114477}
\newtheorem{theorem}{Theorem}
\newtheorem{corollary}{Corollary}
\newtheorem{lemma}{Lemma}
\newtheorem{example}{Example}
\newtheorem{proposition}{Proposition}
\newtheorem{definition}{Definition}
\newcommand\independent{\protect\mathpalette{\protect\independenT}{\perp}}
\def\independenT#1#2{\mathrel{\rlap{$#1#2$}\mkern2mu{#1#2}}}
\newcommand{\cmark}{\ding{51}}%
\newcommand{\xmark}{\ding{55}}%
\title{Fair Clustering: A Causal Perspective}
\author{%
  Fritz Bayer\\
  ETH Zürich\\
  \texttt{frbayer@ethz.ch} \\
  \And
  Drago Ple{\v{c}}ko \\
  Columbia University \\
   \texttt{dp3144@columbia.edu} \\
   \AND
   Niko Beerenwinkel\\
   ETH Zürich \\
   \texttt{niko.beerenwinkel@bsse.ethz.ch} \\
   \And
   Jack Kuipers \\
   ETH Zürich \\
   \texttt{jack.kuipers@bsse.ethz.ch} \\
}
\begin{document}

\maketitle

\begin{abstract}
    Clustering algorithms may unintentionally propagate or intensify existing disparities, leading to unfair representations or biased decision-making. Current fair clustering methods rely on notions of fairness that do not capture any information on the underlying causal mechanisms. We show that optimising for non-causal fairness notions can paradoxically induce direct discriminatory effects from a causal standpoint. We present a clustering approach that incorporates causal fairness metrics to provide a more nuanced approach to fairness in unsupervised learning. Our approach enables the specification of the causal fairness metrics that should be minimised. We demonstrate the efficacy of our methodology using datasets known to harbour unfair biases. 
\end{abstract}


\section{Introduction}

Clustering is a fundamental unsupervised learning technique which is used in various domains to uncover hidden structures and patterns in data \citep{kanungo2002efficient}. However, clustering algorithms may inadvertently propagate or even exacerbate existing disparities related to protected attributes such as gender, race, or religion \citep{chierichetti2017fair, barocas2017fairness}. These disparities can lead to biased representations or decision-making processes that may disproportionately affect marginalized communities (see Example~\ref{example:ex1}). In recent years, there has been a growing interest in developing clustering algorithms that strive to balance the trade-off between quality and fairness of the clustering \citep{chierichetti2017fair, backurs2019scalable, bera2019fair,chhabra2021overview}. 

Despite recent advancements in fair clustering, existing methods predominantly rely on non-causal notions of fairness, neglecting the underlying causal structures that may inadvertently contribute to unfairness within the data. This oversight can be particularly significant, as considering these causal structures might be not only ethically pertinent but also a legal requirement \citep{barocas2016big}. 
Achieving causal fairness involves comprehending and modelling the causal relationships between variables using adequate causal semantics. 
Remarkably, while the causal approach to fairness has been thoroughly explored in the context of supervised learning \citep{kusner2017counterfactual, zhang2018fairness, zhang2018equality, chiappa2019path, wu2019pc, nabi2018fair, plecko2022causal}, its incorporation into unsupervised methods, such as clustering, remains uncharted.

In this work, we introduce a novel causally fair clustering approach that integrates the causal structure of the data. Our approach allows one to specify which causal fairness metrics should be optimised, offering a more targeted perspective on fair clustering. We demonstrate how to perform causally fair clustering to mitigate direct, indirect, and spurious sources of unfairness.

Our key contributions are as follows:
\begin{itemize}
    \item We show that fair clustering algorithms that are based on non-causal notions of fairness can induce direct discriminatory effects from a causal standpoint (see Example~\ref{example:balanced} and Figure~\ref{fig:all}).
    \item We propose an alternative clustering approach that incorporates causal fairness notions, allowing for a more detailed understanding of fairness in unsupervised learning (see Theorem~\ref{theo:alg} and Algorithm~\ref{alg:causally_fair}). \xcomment{Strong Suggestion: Add a Theorem proving the soundness of the Algorithm based on the propositions. You may also call the propositions Lemmas.}
    \item We demonstrate the effectiveness of our causally fair clustering approach on two real-world datasets, showing improvements in causal fairness metrics compared to previous clustering methods (see Figure~\ref{fig:all}).
\end{itemize}

\subsection{Related Work}

Clustering methods with fairness constraints can generally be grouped into those that aim for group fairness and those that aim for individual fairness \citep{chhabra2021overview}.

In the domain of group fairness, \citet{chierichetti2017fair} proposed a method of fair clustering that seeks to create balanced clusters using a technique known as fairlet decomposition. This concept of balanced clusters has been further extended to account for various lp-norm cost functions, multiple groups, relaxed balance requirements, and scalability issues \citep{chierichetti2017fair, rosner2018privacy, bera2019fair, bercea2019cost, ahmadian2019clustering, kleindessner2019guarantees}.

\citet{kleindessner2019fair} introduced a novel approach that focuses on fair representation within selected cluster centres. Their formulation blends k-center objectives with a partition matroid constraint. Building on this, \citet{jones2020fair} put forth an approximation algorithm for this paradigm. The recent advent of the socially fair clustering paradigm, as posited by \citet{abbasi2021fair} and \citet{ghadiri2021socially}, seeks to balance the incurred costs across groups. \citet{makarychev2021approximation} and \citet{goyal2023tight} proposed approximations in this context.


At the intersection of group and individual fairness, recent work has also focused on the fair allocation of public resources in clustering contexts \citep{chen2019proportionally, micha2020proportionally}. These approaches propose that a fair clustering is one where no subset of points has an incentive to assign themselves to a centre outside their designated cluster.

On the other hand, the notion of individual fairness has also received considerable attention. The key is the idea of stability in clusters, where each point in a cluster should not have an average distance to its own cluster larger than to any other cluster \citep{kleindessner2020notion}. \citet{anderson2020distributional} extended the concept of individual fairness to incorporate distributional assignments, ensuring similar points receive analogous assignments, and \citet{brubach2021fairness} introduced pairwise fairness, leveraging point distance to guide separations.

The concept of priority k-center, which deals with usage weights and metric embedding, has also been considered in the context of fair clustering \citep{jung2019center, mahabadi2020individual}. \citet{negahbani2021better} proposed a bi-criteria approximation algorithm for individually fair k-clustering, which has been extended by \citet{vakilian2022improved}.

In conclusion, the landscape of fair clustering is evidently marked by a plethora of non-causal strategies, with various methods prioritising different non-causal notions of fairness and optimising for different types of constraints. In contrast to this prevalent trend, our contribution pivots to a causal perspective on fair clustering, bringing forth a novel approach that optimises causal notions of fairness. 

\begin{example}[Job advertisement]\label{example:ex1}
    Consider the application of clustering in the field of job advertisement, where it is used for target audience segmentation. The clusters are learned based on demographic variables $W$ (e.g., interests, behaviour, income, education, and occupation) and geographic variables $Z$ (e.g., country and ZIP code). The clusters leading to specialised advertisements should be fair with respect to the protected attributes $X$ (e.g., sex, gender or race).  For instance, we want to avoid that ads for high-paying jobs might be unfairly clustered towards a certain gender or ethnic group, perpetuating systemic biases.
\end{example}

\section{Preliminaries}

We use the language of structural causal models (SCMs) as our semantical framework \citep{pearl2000causality}. A SCM is a 4-tuple $\mathcal{M} := \langle V, U, \mathcal{F}, P(u)\rangle$ that encapsulates the following elements:
\begin{itemize}
    \item $V$ and $U$ represent endogenous (observable) and exogenous (unobserved) variable sets, respectively.
    \item $\mathcal{F}$ encompasses a collection of functions $f_{V_i}$ with ${V_i \in V}$. Each individual function is defined as ${V_i \gets f_{V_i}(\pa(V_i), U_{V_i})}$, with $\pa(V_i)\subseteq V$ and $U_{V_i} \subseteq U$.
    \item \(P(u)\) represents a probability distribution over the exogenous variables \(U\).
\end{itemize}

A SCM is associated with a causal diagram, \(\mathcal{G}\), built over the node set \(V\). Within this diagram, a directed edge \(V_i \rightarrow V_j\) emerges if \(V_i\) serves as an argument for \(f_{V_j}\), while an undirected edge \(V_i \bidir V_j\) appears if the related \(U_{V_i}, U_{V_j}\) are interdependent. Throughout, an instantiation of the exogenous variables $U = u$ is termed a \textit{unit}. \(Y_{x}(u)\) captures the potential outcome of \(Y\) upon imposing the condition \(X=x\) for the given unit \(u\). Specifically, \(Y_{x}(u)\) represents the solution for \(Y(u)\) in the submodel \(\mathcal{M}_x\), in which the equations linked to \(X\) within \(\mathcal{F}\) are substituted with the condition \(X = x\).

\subsection{Defining Causal Fairness in Clustering}

To define fairness in unsupervised learning, we adapt the formalism introduced for supervised learning by \citet{plecko2022causal}. Consider the causal diagram displayed in Figure~\ref{fig:sfm}, called the standard fairness model (SFM). In the case of clustering, the outcome variable is the cluster assignment $C$, which can either be a discrete assignment $c_k$ 
to one of the $K$ clusters with $k\in \{1, \dots, K\}$, or a continuous $K$-dimensional cluster membership probability. 

\begin{definition}[Standard Fairness Model, \citealp{plecko2022causal}]
    The SFM is represented by the causal diagram $G_{SFM}$ with endogenous variables $\{X, Z, W, C\}$, which are defined by
    \begin{itemize}
    \item The protected attribute, denoted by $X$ (e.g., gender, race, religion),
    \item The set of confounding variables $Z$, which are not causally influenced by the attribute $X$ (e.g., demographic information),
    \item The set of mediator variables $W$ that could be causally influenced by the attribute $X$ (e.g., educational level or other job-related information),
    \item The cluster assignment, denoted by $C$.
    \end{itemize} 
    Nodes $Z$ and $W$ are possibly multi-dimensional or empty. Furthermore, for a causal diagram $G$, the projection of $G$ onto the SFM is defined as the mapping of the endogenous variables $V$ appearing in $G$ into four groups $X$, $Z$, $W$, $C$, as described above. The projection is denoted by $\Pi_{SFM}(G)$ and is constructed by choosing the protected attribute and grouping the confounders $Z$ and mediators $W$. 
\end{definition}

\begin{figure}[t]
    \centering
    \begin{center}
        \begin{tikzpicture}
    	 [>=stealth, rv/.style={thick}, rvc/.style={triangle, draw, thick, minimum size=7mm}, node distance=18mm]
    	 \pgfsetarrows{latex-latex};
    	 \begin{scope}
    		\node[rv] (0) at (0,1.5) {$Z$};
    	 	\node[rv] (1) at (-2,0) {$X$};
    	 	\node[rv] (2) at (0,-1.5) {$W$};
    	 	\node[rv] (3) at (2,0) {$C$};
    	 	\draw[->] (1) -- (2);
    		\draw[->] (0) -- (3);
    	 	\path[->] (1) edge[bend left = 0] (3);
    		\path[<->] (1) edge[bend left = 30, dashed] (0);
    	 	\draw[->] (2) -- (3);
    		\draw[->] (0) -- (2);
    	 \end{scope}
    	\end{tikzpicture}
    \end{center}
    \caption{Standard fairness model in clustering.}
    \label{fig:sfm}
\end{figure}
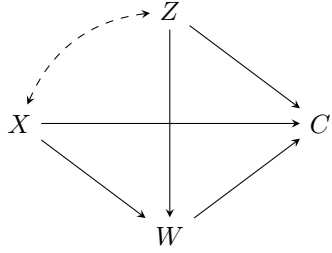

The standard fairness model allows us to define fair cluster associations according to common fairness measures. Note that the variables suggested for each category of the standard fairness model should not be generalised and have to be carefully selected for each dataset. In the clustering setting, the predicted attributes are the cluster associations (or cluster membership probabilities) $C$ of the $K$ clusters. 
For the sake of simplicity, we take $X$ to be binary, while $Z$ and $W$ can be categorical or continuous.

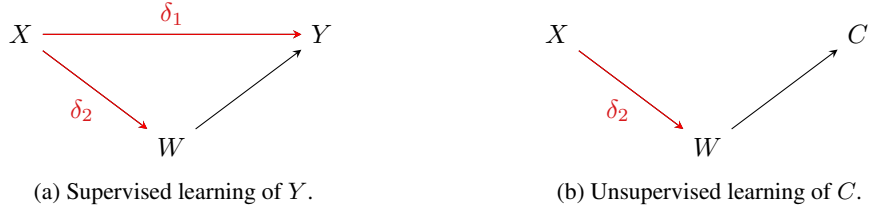
\begin{figure}[t]
     \centering
     \begin{subfigure}[b]{0.49\textwidth}
         \centering
         {
    \begin{tikzpicture}
	[>=stealth, rv/.style={thick}, rvc/.style={triangle, draw, thick, minimum size=7mm}, node  distance=18mm]
	\pgfsetarrows{latex-latex};
	\begin{scope}
		\node[rv] (4) at (2,0) {$Y$};
	 	\node[rv] (1) at (-2,0) {$X$};
	 	\node[rv] (2) at (0,-1.5) {$W$};
        \draw[->] (1) -- (2) node[below, midway, xshift=-5, fill=none, color=myothercolor] {$\delta_2$};
        \draw[->] (1) -- (4) node[above, midway, fill=none, color=myothercolor] {$\delta_1$};
	 	\draw[->] (1) edge[color=red] (2);
		\draw[<-] (4) -- (2);
		\draw[<-] (4) edge[color=red] (1);
	\end{scope}
	   \end{tikzpicture}
        }
         \caption{Supervised learning of $Y$.}
         \label{fig:sup}
     \end{subfigure}
     \hfill
     \begin{subfigure}[b]{0.49\textwidth}
         \centering
         {
         \begin{tikzpicture}
	 [>=stealth, rv/.style={thick}, rvc/.style={triangle, draw, thick, minimum size=7mm}, node distance=18mm]
	 \pgfsetarrows{latex-latex};
	 \begin{scope}
		\node[rv] (5) at (2,0) {$C$};
	 	\node[rv] (1) at (-2,0) {$X$};
	 	\node[rv] (2) at (0,-1.5) {$W$};
		\draw[<-] (5) -- (2);
        \draw[->] (1) -- (2) node[below, midway, xshift=-5, fill=none, color=myothercolor] {$\delta_2$};
	 	\draw[->] (1) edge[color=red]  (2);
	 \end{scope}

	\end{tikzpicture}
    }
         \caption{Unsupervised learning of $C$.}
         \label{fig:unsup}
     \end{subfigure}
        \caption{Neglecting the protected attribute in supervised and unsupervised learning from a causal perspective.}
        \label{fig:sup_unsup}
\end{figure}

A range of possible causal fairness constraints could be added to this model. First, following the approach of \citep{pearl:01}, we use the natural direct effect (NDE) as a notion of a direct effect along the edge $X \to C$ 
\begin{align}
    \text{NDE}_{x_0, x_1}(c_k) = P\big((c_k)_{x_1, W_{x_0}}\big) - P\big((c_k)_{x_0}\big) .
\end{align}
The counterfactual distribution $P((c_k)_{x_0})$ denotes the causal effect of an intervention $do(X = x_0)$ \citep[Ch. 3]{pearl2000causality} on $c_k$, where the counterfactual variable $(c_k)_{x_0}$ denotes the response of $c_k$ to the intervention. Following the same notation, the counterfactual distribution $P\big((c_k)_{x_1, W_{x_0}}\big)$ describes how $c_k$ changes when setting $X$ to $x_1$, but keeping the mediators $W$ at the value it would have taken had $X$ been $x_0$.


To achieve that the constructed cluster assignments $c_k$ are fair with respect to the NDE, one may require that
\begin{align}
    \text{NDE}_{x_0, x_1}(c_k) = \text{NDE}_{x_1, x_0}(c_k) = 0, \label{eq:ndes-zero}
\end{align}
$\forall k \in \{ 1, \dots, K\}$, that is, for every cluster there is a separate constraint. Under the assumptions of the SFM introduced above, the NDE can be identified (i.e., uniquely computed) from the observational data. Thus, the constraint $\text{NDE}_{x_0, x_1}(c_k) = 0$ can be written as
\begin{equation}
    \sum_{z, w} [P(c_k \mid x_1, z, w) - P(c_k \mid x_0, z, w)] \cdot P(w \mid x_0, z)P(z) = 0.
\end{equation}

This constraint ensures that the direct effect, when averaged across different values of $Z, W$, equals 0 and thus guarantees absence of any population direct effect in the clustering assignment. 

Turing around the definition of the NDE, we can obtain a notion for the natural indirect effect (NIE)
\begin{align}
    \text{NIE}_{x_0, x_1}(c_k) = P\big((c_k)_{x_0, W_{x_1}}\big) - P\big((c_k)_{W_{x_0}}\big).
\end{align}
which quantifies the difference of the mediator values $W_{x_0}$ and $W_{x_1}$ on $c_k$ had $X$ been $x_0$. Under the assumptions of the SFM, the NIE is identifiable from observational data, and hence the requirement $\text{NIE}_{x_0, x_1}(c_k)=0$ can be written as
\begin{align}
    \sum_{z, w} P(c_k \mid x_0, z, w) [P(w \mid x_1, z) - P(w \mid x_0, z)]P(z) = 0. 
\end{align}

Further, we define the total variation (TV) as
\begin{equation}
    \text{TV}_{x_0, x_1}(c_k)=P(c_k\mid x_1)- P(c_k\mid x_0),
\end{equation}
which we will relate to the natural direct and indirect effects over the next sections.

\section{Causal Perspective on Fair Clustering}

\subsection{Fairness Through Unawareness in Clustering}


We start with an informal recap of fairness through unawareness (FTU), which means neglecting the protected attribute throughout the analysis in supervised learning before discussing the unsupervised learning case. We exclude $Z$ during this discussion.

\paragraph{Supervised Learning (Classification).} 
Figure~\ref{fig:sup} shows the connections among the variables in a supervised learning scenario. The predicted attribute in this scenario is $Y$. An example could be that we want to predict the salary $Y$ based on the occupation $W$, while accounting for the protected attribute gender $X$. The measured data could be subject to the following unfair biases: first, a direct bias $\delta_1$ along $X \rightarrow Y$, and second, an indirect bias $\delta_2$ along $X \rightarrow W$. Both biases are intrinsic to the data and hence have to be adjusted for in order to be removed. Since the indirect bias $\delta_2$ is propagated via $W \rightarrow Y$, we denote the resulting bias along $X \rightarrow W \rightarrow Y$ as $\hat{\delta}_2$. 


Including the protected attribute $X$ in the classification of $Y$ means that we will learn both the direct and indirect effects in the classifier. Hence, the total variation is a combination of the direct and indirect effect
\begin{equation}
    \text{TV}_{x_0,x_1}(y) = \underbrace{\delta_1}_{X\rightarrow Y} +   \underbrace{\hat{\delta}_2}_{X\rightarrow W \rightarrow Y}.
\end{equation}

Unfortunately, simply ignoring the protected attribute $X$ during the classification process (fairness through unawareness) does not remove the direct or indirect biases. This is due to the correlation between $X$ and $W$, which allows the classifier to gain knowledge about $X$ via $W$. Thus, the direct bias $\delta_1$ may still be propagated via the indirect pathway through the correlation with $W$.  
We denote the bias that is propagated via this correlation $\hat{\delta}_1$, since in practice, this indirect propagation reduces the direct bias $\delta_1$. The total variation in this scenario is
\begin{align}
        \text{TV}_{x_0,x_1}(y) = \underbrace{0}_{X\rightarrow Y} +   \underbrace{\hat{\delta}_1 + \hat{\delta}_2}_{X\rightarrow W \rightarrow Y}.
\end{align}
Hence, even though the protected attribute is not used in the classification, it could be learned indirectly via a correlation with the mediator $W$ and hence will propagate the direct discriminatory bias. 


\paragraph{Unsupervised Learning (Clustering).} 
In supervised learning, all edges ${X\rightarrow W}$, $X{\rightarrow Y}$, and ${W\rightarrow Y}$ reflect the generating process of the data. This is in contrast to the unsupervised learning case (Figure~\ref{fig:unsup}), where only the edge $X \rightarrow W$ depicts the generating process of the data and contains the unfair bias $\delta_2$. Learning the cluster attribute $C$ based on the input $W$ introduces a new assignment mechanism $f_C$ in the SCM that is under our control. Note that if the protected attribute is not used as input to $f_C$, there is no direct edge between $X$ and $C$, which implies the conditional independence ${X \independent C \mid W}$. This conditional independence guarantees the absence of the natural direct effect.

\begin{lemma}[Fairness through unawareness in clustering] \label{prop:prop1}
    If the protected attribute $X$ is not an input of the clustering mechanism $f_C$, the natural direct effect is zero, i.e.,
    \begin{equation}
        \textsc{NDE}_{x_0, x_1}(c_k) = 0. 
    \end{equation}
\end{lemma}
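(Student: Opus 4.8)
The plan is to show that, under the hypothesis, the two counterfactual quantities defining the natural direct effect coincide unit-by-unit, so their distributions are identical and their difference vanishes. I would argue directly at the level of the structural equations rather than through the identification formula, since the claim is fundamentally a statement about the mechanism $f_C$.

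First I would translate the hypothesis into structural form. Saying that $X$ is not an input of $f_C$ means that the structural assignment for the cluster variable reads $C \gets f_C(\pa(C), U_C)$ with $X \notin \pa(C)$; under the SFM of Figure~\ref{fig:sfm} this forces $\pa(C) \subseteq \{Z, W\}$, so that $C = f_C(Z, W, U_C)$ for every unit $u$. In particular there is no directed edge $X \to C$, which yields the conditional independence $X \independent C \mid Z, W$ invoked in the text.

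Next I would evaluate the two nested potential outcomes appearing in $\text{NDE}_{x_0,x_1}(c_k) = P((c_k)_{x_1, W_{x_0}}) - P((c_k)_{x_0})$ on a fixed unit $u$. For $(c_k)_{x_0}$ we impose $do(X = x_0)$: because $Z$ is not causally influenced by $X$ in the SFM (they are linked only by the bidirected edge $X \bidir Z$), the value $Z(u)$ is unchanged, the mediator takes the value $W_{x_0}(u)$, and hence $(c_k)_{x_0}(u) = f_C(Z(u), W_{x_0}(u), U_C(u))$. For the nested term $(c_k)_{x_1, W_{x_0}}$ we set $X = x_1$ while holding $W$ at $W_{x_0}(u)$; since $f_C$ does not take $X$ as an argument and $Z$ is again unaffected by the intervention on $X$, the value of $X$ is irrelevant to $C$, giving $(c_k)_{x_1, W_{x_0}}(u) = f_C(Z(u), W_{x_0}(u), U_C(u))$. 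The two expressions are literally the same function of $u$, so $(c_k)_{x_1, W_{x_0}}(u) = (c_k)_{x_0}(u)$ for every $u$; averaging over the exogenous distribution $P(u)$ equates the two distributions and makes the NDE zero.

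As a cross-check I would recover the same conclusion from the identification formula. The conditional independence $X \independent C \mid Z, W$ gives $P(c_k \mid x_1, z, w) = P(c_k \mid x_0, z, w)$ for all $z, w$, so every bracketed term in $\sum_{z,w}[P(c_k \mid x_1, z, w) - P(c_k \mid x_0, z, w)]\,P(w \mid x_0, z)P(z)$ vanishes and the sum is zero. The main obstacle I anticipate is purely one of counterfactual bookkeeping: one must argue carefully that the hard intervention $do(X = x_1)$ in the nested term does not leak into $C$ through any path other than $W$ (which is held fixed). This is exactly where the SFM structure is needed, namely that $X$ can reach $C$ only via the now-severed direct edge and via the clamped mediator $W$, while $Z$ sits outside the causal influence of $X$.
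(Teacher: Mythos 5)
Your proof is correct, but it takes a genuinely different route from the paper's. The paper argues entirely at the level of the identification formula: it notes that dropping $X$ from $f_C$ yields the conditional independence $X \independent C \mid W$, hence $P(c_k \mid x_1, w) = P(c_k \mid x_0, w)$, and substitutes this into the identified expression $\sum_w [P(c_k \mid x_1, w) - P(c_k \mid x_0, w)]\,P(w \mid x_1)$ so that every bracketed term vanishes --- which is essentially your ``cross-check'' paragraph. Your primary argument is instead structural and unit-level: you show $(c_k)_{x_1, W_{x_0}}(u) = (c_k)_{x_0}(u) = f_C(Z(u), W_{x_0}(u), U_C(u))$ for every unit $u$, which is stronger (the counterfactuals agree pointwise, not merely in distribution) and does not rely on identifiability of the NDE from observational data. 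Your version also handles $Z$ more carefully than the paper does: the paper's proof is written for the simplified setting in which $Z$ is excluded (as in its FTU discussion), and its claimed independence $X \independent C \mid W$ would fail in general under the full SFM because of the open path $X \bidir Z \to C$; the correct statement, which you give, is $X \independent C \mid Z, W$, and your unit-level bookkeeping correctly uses the fact that $Z$ is not a descendant of $X$ and so is unmoved by the interventions. In short, your approach buys robustness to the presence of the confounders $Z$ and makes explicit exactly where each structural assumption enters, while the paper's approach buys brevity and a direct connection to the observational formula it later uses for estimation.
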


Hence, in contrast to the supervised case, fairness through unawareness allows the removal of the natural direct effect in clustering. However, an unfair bias can still persist via indirect effects. In the next section, we will discuss the implications of fair clustering and show how to remove indirect discriminatory effects. 

\subsection{Fairness Through Balanced Clusters}
Balanced clusters exist when each protected class has equal representation across the clusters \citep{chierichetti2017fair}. This implies that the total variation of a balanced cluster $c_k$ equals $0$, i.e.,
\begin{equation}
    \text{TV}_{x_0, x_1}(c_k)=P(c_k\mid x_1)- P(c_k\mid x_0) = 0 .
\end{equation}
However, previous work has shown that optimizing the TV measure to be zero does not necessarily reduce causal measures \citep{nilforoshan2022causal}.
From a causal perspective, we can further decompose the total variation into the natural direct, natural indirect and experimental spurious effect, for which we reintroduce $Z$ into the discussion.
\begin{proposition}
    [Theorem 4.2 in \citealp{plecko2022causal}]\label{prop:plecko}
    The total variation measure can be decomposed as 
    \begin{align}
        \textup{TV}_{x_0, x_1}(c_k)=& \textup{NDE}_{x_0, x_1}(c_k) - \textup{NIE}_{x_0, x_1}(c_k) +   \textup{Exp-SE}_{x_0, x_1}(c_k),
    \end{align}
    where $\textup{Exp-SE}_{x_0, x_1}(c_k)$ is the experimental spurious effect, defined as 
    \begin{equation}
        \textup{Exp-SE}_{x_0, x_1}(c_k)=\textup{Exp-SE}_{x_1}(c_k) - \textup{Exp-SE}_{x_0}(c_k)
    \end{equation}
    with
    \begin{equation}
        \textup{Exp-SE}_{x}(c_k)=P(c_k \mid x) - P\big((c_k)_{x}\big) .
    \end{equation}
\end{proposition}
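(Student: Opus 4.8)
The plan is to prove the identity in two purely algebraic steps, neither of which invokes the SFM assumptions: the decomposition is a statement about counterfactual distributions that holds in \emph{any} SCM, with only the consistency axiom $(c_k)_{x} = (c_k)_{x, W_x}$ doing real work. The SFM would enter only afterwards, when one rewrites each term as the observational estimand $\sum_{z,w}(\cdots)$ already displayed in the text. First I would peel off the spurious part by adding and subtracting the two interventional quantities $P((c_k)_{x_1})$ and $P((c_k)_{x_0})$:
\[
\textup{TV}_{x_0,x_1}(c_k)
= \big[P(c_k \mid x_1) - P((c_k)_{x_1})\big] - \big[P(c_k \mid x_0) - P((c_k)_{x_0})\big]
+ P((c_k)_{x_1}) - P((c_k)_{x_0}).
\]
By the definition of the experimental spurious effect the first two brackets are exactly $\textup{Exp-SE}_{x_1}(c_k) - \textup{Exp-SE}_{x_0}(c_k) = \textup{Exp-SE}_{x_0,x_1}(c_k)$, and the trailing difference is the total interventional effect $\textup{TE}_{x_0,x_1}(c_k)$. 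This step is a pure rearrangement and needs no structural assumptions.

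Second I would decompose $\textup{TE}_{x_0,x_1}(c_k) = P((c_k)_{x_1}) - P((c_k)_{x_0})$ by inserting a nested counterfactual as a bridge. Using consistency to write $(c_k)_{x_1} = (c_k)_{x_1, W_{x_1}}$ and $(c_k)_{x_0} = (c_k)_{x_0, W_{x_0}}$, I would split
\[
\textup{TE}_{x_0,x_1}(c_k)
= \big[P((c_k)_{x_1, W_{x_0}}) - P((c_k)_{x_0})\big]
+ \big[P((c_k)_{x_1}) - P((c_k)_{x_1, W_{x_0}})\big].
\]
The first bracket holds the mediator at its $x_0$-value while switching the direct argument, and is precisely $\textup{NDE}_{x_0,x_1}(c_k)$; the second bracket holds the direct argument fixed while switching the mediator from $W_{x_0}$ to $W_{x_1}$, and is the natural indirect effect up to sign. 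Substituting back into the first step then yields
\[
\textup{TV}_{x_0,x_1}(c_k)
= \textup{NDE}_{x_0,x_1}(c_k) - \textup{NIE}_{x_0,x_1}(c_k) + \textup{Exp-SE}_{x_0,x_1}(c_k).
\]

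The hard part will be the bookkeeping in the second step: choosing the bridging counterfactual so that the direct bracket collapses to the $\textup{NDE}$ \emph{as defined} forces a particular index ordering on the indirect bracket, and one must check that this ordering and its sign coincide with the paper's definition of $\textup{NIE}_{x_0,x_1}(c_k)$. This is exactly the place where a careless telescoping of Pearl's mediation formula flips an index, so I would verify the match by expanding both the indirect bracket and the $\textup{NIE}$ definition to the same nested-counterfactual form $P((c_k)_{\,\cdot\,, W_{x_1}}) - P((c_k)_{\,\cdot\,, W_{x_0}})$ before equating them. Finally, since every identity above is linear in the indicator event $c_k$, the same telescoping carries over verbatim from the discrete assignment to the continuous $K$-dimensional membership case by taking expectations, and it holds separately for each $k \in \{1, \dots, K\}$.
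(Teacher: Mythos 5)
There is no in-paper proof to compare against here: the paper imports this proposition wholesale from Theorem~4.2 of \citet{plecko2022causal}, so your self-contained telescoping derivation is necessarily a different route (it is, in essence, how such decompositions are established in the cited work). Your first step is correct and, as you say, purely algebraic: adding and subtracting $P\big((c_k)_{x_1}\big)$ and $P\big((c_k)_{x_0}\big)$ yields
\[
\textup{TV}_{x_0,x_1}(c_k)=\textup{Exp-SE}_{x_0,x_1}(c_k)+\textup{TE}_{x_0,x_1}(c_k),
\qquad \textup{TE}_{x_0,x_1}(c_k):=P\big((c_k)_{x_1}\big)-P\big((c_k)_{x_0}\big),
\]
with the SFM entering only through identification of the individual terms, not through the decomposition itself. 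Your remarks on linearity in the event $c_k$ and on soft cluster memberships are also fine.

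The gap sits exactly where you flagged ``the hard part'' and then asserted the conclusion instead of performing the promised check. By composition, your second bracket equals
\[
P\big((c_k)_{x_1,W_{x_1}}\big)-P\big((c_k)_{x_1,W_{x_0}}\big)=-\,\textup{NIE}_{x_1,x_0}(c_k),
\]
the natural indirect effect with the direct arm held at $x_1$, whereas the proposition requires $-\,\textup{NIE}_{x_0,x_1}(c_k)=-\big[P\big((c_k)_{x_0,W_{x_1}}\big)-P\big((c_k)_{x_0}\big)\big]$, with the direct arm held at $x_0$ (this reading of the paper's NIE is forced by its own identification formula $\sum_{z,w}P(c_k\mid x_0,z,w)[P(w\mid x_1,z)-P(w\mid x_0,z)]P(z)$). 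These are different counterfactual quantities; they agree only under an additional no-interaction condition such as
\[
P\big((c_k)_{x_0,W_{x_1}}\big)-P\big((c_k)_{x_0,W_{x_0}}\big)=P\big((c_k)_{x_1,W_{x_0}}\big)-P\big((c_k)_{x_1,W_{x_1}}\big),
\]
which no axiom of SCMs supplies. Consequently, what your telescoping actually proves is
\[
\textup{TV}_{x_0,x_1}(c_k)=\textup{NDE}_{x_0,x_1}(c_k)-\textup{NIE}_{x_1,x_0}(c_k)+\textup{Exp-SE}_{x_0,x_1}(c_k),
\]
consistent with the mediation-formula convention $\textup{TE}_{x_0,x_1}=\textup{NDE}_{x_0,x_1}-\textup{NIE}_{x_1,x_0}$ and with the form stated in the cited reference; the proposition as printed in this paper carries transposed NIE indices. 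So your argument is a correct proof of the index-corrected statement, but as a proof of the literal statement it is incomplete: either note explicitly that the printed $\textup{NIE}_{x_0,x_1}$ must be read as $\textup{NIE}_{x_1,x_0}$ (an index typo relative to the source theorem), or add the no-interaction assumption under which the two coincide.
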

The experimental spurious effect measures the disparity in $c_k$ when setting $X=x$ by intervention, compared to observing that $X = x$. 
A direct consequence of Proposition~\ref{prop:plecko} is that if the clusters are balanced, i.e., $\text{TV}_{x_0, x_1}(c_k) = 0$, then
\begin{align}
    \text{NDE}_{x_0, x_1}(c_k) = \text{NIE}_{x_0, x_1}(c_k)-\text{Exp-SE}_{x_0, x_1}(c_k) .
\end{align}
Hence, enforcing the clusters to be balanced can induce a natural direct effect unless $\text{NIE}_{x_0, x_1}(c_k)-\text{Exp-SE}_{x_0, x_1}(c_k)=0$, as illustrated in the following example. 

\begin{example}\label{example:balanced}
    Consider the case where we want to cluster data which includes the country $Z$, race $X$, and the browsing preferences $W$ of internet users. Assume the browsing preferences are identical across $X$, i.e., there is no natural indirect effect $\textup{NIE}_{x_0, x_1}(c_k)=0$. If there is a spurious effect $\textup{Exp-SE}_{x}(c_k)\neq 0$, then enforcing balanced clusters with $\text{TV}_{x_0, x_1}(c_k)=0$ would induce a direct effect from race to cluster in order to counterbalance the spurious effect.
    \xcomment{Not sure this is true. It could also be the case that NIE is just set to 0, and the spurious is set to 0. Thus, this does not seem like a robust statement.}
\end{example}



As a special case, consider a scenario in which we have no spurious effect and neglect the protected attribute in the clustering. This is an interesting scenario, as it bridges the gap between causal and non-causal fairness optimizations. 

\begin{corollary}
    If there is no natural direct effect $\textup{NDE}_{x_0, x_1}(c_k) = 0$ because the protected attribute is neglected, and there is no spurious effect ${\textup{Exp-SE}_{x_0, x_1}(c_k)=0}$, then
\begin{align}
    \textup{TV}_{x_0, x_1}(c_k) = -\textup{NIE}_{x_0, x_1}(c_k) .
\end{align}
\xcomment{The usefulness of this statement is unclear in the big picture. Furthermore, it hinges on a strong assumption of the spurious effect being zero (which may not hold in practice). \\
Instead, there is a result that can strengthen the manuscript by a large margin. It would be great to show that, for a randomly sampled SCM, minimising the TV measure will not guarantee that NIE and Exp-SE are equal to $0$. We have this formal result in the Fair Prediction Theorem in the CFA paper, and I am certain a similar statement holds for your unsupervised setting. I strongly encourage you to have a look at our statement; parse it carefully; and try to derive a statement for your setting from it. }
\end{corollary}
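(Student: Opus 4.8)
The plan is to obtain the identity directly from the total variation decomposition in Proposition~\ref{prop:plecko}, treating the two hypotheses as two of the three summands vanishing. First I would recall that, under the assumptions of the SFM, the decomposition
\begin{equation*}
    \textup{TV}_{x_0, x_1}(c_k)= \textup{NDE}_{x_0, x_1}(c_k) - \textup{NIE}_{x_0, x_1}(c_k) + \textup{Exp-SE}_{x_0, x_1}(c_k)
\end{equation*}
holds, and that it applies verbatim when the outcome is the cluster assignment $c_k$, since the SFM of Figure~\ref{fig:sfm} places $C$ in the role of the outcome variable.

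Next I would justify each assumed zero separately. The first hypothesis, $\textup{NDE}_{x_0, x_1}(c_k)=0$, is not an additional assumption but a consequence of neglecting the protected attribute: because $X$ is not an input to $f_C$, Lemma~\ref{prop:prop1} applies and the natural direct effect vanishes. Concretely, the absence of the edge $X \to C$ yields $X \independent C \mid Z, W$, hence $P(c_k \mid x_1, z, w) = P(c_k \mid x_0, z, w)$ for all $z,w$, which makes the NDE identification expression collapse to zero. The second hypothesis, $\textup{Exp-SE}_{x_0, x_1}(c_k)=0$, is taken as given in the statement.

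Substituting both vanishing terms into the decomposition then gives
\begin{equation*}
    \textup{TV}_{x_0, x_1}(c_k)= 0 - \textup{NIE}_{x_0, x_1}(c_k) + 0 = -\textup{NIE}_{x_0, x_1}(c_k),
\end{equation*}
which is exactly the claimed identity.

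Because the argument reduces to a substitution, there is no genuine algebraic obstacle; the only steps that require care are confirming that the SFM identification conditions hold so that Proposition~\ref{prop:plecko} is applicable, and verifying that the unawareness hypothesis truly triggers Lemma~\ref{prop:prop1} (i.e., that omitting $X$ from $f_C$ severs the direct path rather than merely reducing its weight). The substantive takeaway, rather than a technical difficulty, is interpretive: under unawareness and in the absence of spurious confounding, the entire residual disparity measured by TV is attributable to the natural indirect effect, so any nonzero total variation observed in this regime reflects indirect mediation through $W$ and not direct discrimination.
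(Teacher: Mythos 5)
Your proof is correct and matches the paper's own argument: the paper likewise obtains the identity by substituting $\textup{NDE}_{x_0,x_1}(c_k)=\textup{Exp-SE}_{x_0,x_1}(c_k)=0$ directly into the total variation decomposition of Proposition~\ref{prop:plecko}. Your additional step of grounding the vanishing NDE in Lemma~\ref{prop:prop1} (fairness through unawareness) is consistent with how the paper frames the hypothesis, so there is no substantive difference in approach.
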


The proof is a direct consequence of Proposition~\ref{prop:plecko}, when $\textup{NDE}_{x_0, x_1}(c_k) = \textup{Exp-SE}_{x_0, x_1}(c_k) = 0$. Hence, in this particular setting, optimizing for balanced clusters is identical to minimising the natural indirect effect. This implies that under the assumption of the SFM, minimising the natural indirect effect could be an alternative to balanced clustering algorithms if the aim is to minimise the total variation. 

\begin{algorithm}[ht]
	\caption{Targeted Learning of Causally Fair Clusters}\label{alg:causally_fair}
    \textbf{Input}: A matrix of variables that we wish to cluster, an SFM mapping $(X,Z,W)$, and a binary vector of length three $(\text{NDE},\text{NIE}, \text{SE})$ specifying which effects should be minimised \\
    \textbf{Output}: Causally fair cluster associations $\phi(X,Z,W)$
    \begin{algorithmic}[1]
	%
    %
    \STATE Optimal transport:
    \IF{$\textsc{SE}=1$}
    \STATE
    Transport $Z \mid x_1$ onto $Z \mid x_0$\\
    Denote the optimal transport map with $\tau^z$    
    \ELSE
    \STATE
    Let $\tau^z$ be the identity map $\tau^z(Z) = Z$
    \ENDIF
    \IF {$\textsc{NIE}=1$}
    \STATE
    Transport $W \mid x_1, \tau^{z}(Z)$ onto $W \mid x_0, Z$ \\
    Denote the transport map with $\tau^w$
    \ELSE
    \STATE
    Transport $W \mid x, \tau^{z}(Z)$ onto $W \mid x, Z$ for ${x \in \{x_0, x_1\}}$ \\
    Denote the transport map with $\tau^w$
    \ENDIF
    \IF{$\textsc{NDE}=1$}
    \STATE
    Fairness through unawareness (neglecting $X$ in input): \\    
    $\phi(X,Z,W)\leftarrow f_C (W,Z)$
    \ELSE
    \STATE
    $\phi(X,Z,W)\leftarrow f_C (X,W,Z)$
    \ENDIF
    \end{algorithmic}
\end{algorithm}

\section{Learning Causally Fair Clusters}\label{sec:cf}

We define causally fair clusters as cluster assignments that minimise the following causal fairness notions: natural direct, natural indirect, and experimental spurious effect. Note that minimisation in this context refers to reducing the absolute value of these effects. In particular, we propose Algorithm~\ref{alg:causally_fair}, which allows to minimise each of the causal fairness notions separately. As part of the input, one needs to specify which of the causal fairness notions should be optimised. 

\subsection{Natural Direct Effect}\label{sec:direct}

By excluding the protected attribute from the clustering process as described in Algorithm~\ref{alg:causally_fair}, we can mitigate the natural direct effect, i.e., $\text{NDE}_{x_0, x_1}(c_k) = 0$, as shown in Lemma~\ref{prop:prop1}. Nevertheless, the protected attribute plays a crucial role in neutralising the natural indirect and experimental spurious effect, as we will show in the subsequent sections.

\subsection{Natural Indirect Effect}\label{sec:indirect}

The natural indirect effect from $X \rightarrow W \rightarrow C$ can be separated in the two pathways $X \rightarrow W$ and $W \rightarrow C$. The pathway $X \rightarrow W$ displays a bias that is intrinsic to the data. In contrast, the pathway $W \rightarrow C$ is learned throughout the clustering mechanism $f_C$ and may propagate an unfair bias.

One option is to use optimal transport \citep{peyre2019computational} to adjust $W$ such that the following condition holds (this approach was used in the context of fairness by \citealp{plevcko2020fair})
\begin{align}\label{eq:ot}
    P(w \mid x_1) - P(w \mid x_0) = 0 , \; \forall \; w \in W.
\end{align}
From Equation~(\ref{eq:ot}), given the SFM, we infer that the natural indirect effect is null. Since, under the assumptions of the SFM, the natural indirect effect is identifiable from observational data, the constraint $\text{NIE}_{x_0, x_1}(c_k) = 0$ can be written as
\begin{align}
    \sum_{z, w} P(c_k \mid x_1, z, w) [P(w \mid x_1, z) - P(w \mid x_0, z)]P(z) = 0. \label{eq:nie-null}
\end{align}

In practice, this optimization may be imperfect and only yield  $\Vert P(W \mid x_0, z) - P(W \mid x_1, z) \Vert_1 \le\delta$, such that the remaining difference can propagate to the natural indirect effect. In this case, we have the following upper bound. 

\begin{lemma}[Natural indirect effect]\label{prop:proposition_nie}
Under the assumptions of the SFM and ${\Vert P(W \mid x_1, z) - P(W \mid x_0, z) \Vert_1 \le\delta_w}$, we have that
    \begin{align}
    \textup{NIE}_{x_0, x_1}(c_k) \leq \Vert\sup_{w} P(c_k \mid x_0, w, Z) \cdot P(Z) \cdot \delta_w \Vert_1
\end{align}
\end{lemma}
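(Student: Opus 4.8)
The plan is to start from the NIE identification formula that the excerpt already establishes under the SFM assumptions,
\[
    \text{NIE}_{x_0, x_1}(c_k) = \sum_{z, w} P(c_k \mid x_0, z, w)\,[P(w \mid x_1, z) - P(w \mid x_0, z)]\,P(z),
\]
and to convert this exact identity into the claimed upper bound through a short chain of elementary inequalities. The guiding observation is that, among the factors in the summand, only the conditional cluster probability $P(c_k \mid x_0, z, w)$ is not directly controlled by the hypothesis $\Vert P(W \mid x_1, z) - P(W \mid x_0, z)\Vert_1 \le \delta_w$; I therefore isolate it by a supremum over $w$ and let the remaining mediator difference be absorbed by the assumed $L_1$ bound.

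Concretely, I would first reorganise the double sum by grouping over $z$, writing the NIE as $\sum_z P(z)\big(\sum_w P(c_k \mid x_0, z, w)[P(w \mid x_1, z) - P(w \mid x_0, z)]\big)$. Since the target is a nonnegative quantity, it suffices to bound $|\text{NIE}_{x_0, x_1}(c_k)|$; applying the triangle inequality to the inner $w$-sum and then replacing each $P(c_k \mid x_0, z, w)$ by $\sup_w P(c_k \mid x_0, w, z)$ pulls the cluster probability out as a constant for fixed $z$. What remains inside is exactly $\sum_w |P(w \mid x_1, z) - P(w \mid x_0, z)| = \Vert P(W \mid x_1, z) - P(W \mid x_0, z)\Vert_1 \le \delta_w$. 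Re-summing over $z$ against $P(z)$, and noting that every summand is nonnegative, yields $\sum_z P(z)\,\sup_w P(c_k \mid x_0, w, z)\,\delta_w$, which is precisely the $L_1$ norm $\Vert \sup_w P(c_k \mid x_0, w, Z)\cdot P(Z)\cdot \delta_w\Vert_1$ appearing in the statement, with the norm understood as taken over the values of $Z$. Since $\text{NIE}_{x_0, x_1}(c_k) \le |\text{NIE}_{x_0, x_1}(c_k)|$, this completes the derivation.

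The computation itself is routine, so the points that require care are interpretive rather than technical. First, I must read the right-hand side correctly: the $\Vert\cdot\Vert_1$ is taken over the distribution of $Z$, and $\delta_w$ is the uniform bound on the conditional mediator distance $\Vert P(W \mid x_1, z) - P(W \mid x_0, z)\Vert_1$ that holds for every $z$, which is what licenses pulling it through the sum over $z$ (if instead $\delta_w$ were allowed to vary with $z$, the identical argument goes through with $\delta_w$ retained inside the sum, matching how the bound is displayed). Second, the step that invokes identifiability of the NIE is not re-derived here but borrowed directly from the SFM assumptions already granted in the excerpt. The only genuine obstacle, such as it is, lies in the supremum-extraction step: it is valid because $\sup_w P(c_k \mid x_0, w, z)$ is finite for each $z$ (bounded by $1$, being a probability), and because factoring it out of a sum of nonnegative terms preserves the inequality.
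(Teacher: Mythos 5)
Your proof is correct and takes essentially the same route as the paper's: both start from the SFM identification formula for the NIE, group the sum by $z$, extract $\sup_{w} P(c_k \mid x_0, w, z)$ via H\"older's inequality (triangle inequality plus supremum extraction), absorb the remaining mediator difference into the assumed bound $\delta_w$, and rewrite the result as the $L_1$ norm over $Z$. If anything, your version is slightly more careful on one point: you pass through $|\textup{NIE}_{x_0,x_1}(c_k)|$ explicitly, whereas the paper writes its second line as an equality involving an $L_1$ norm, which glosses over the fact that the identification formula contains signed terms and should properly be an inequality at that step.
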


\subsection{Experimental Spurious Effect}\label{sec:se}

Analogous to the natural indirect effect, we can also adapt $Z$ such that
\begin{align}\label{eq:ot2}
    P(z \mid x_1) - P(z \mid x_0) = 0 , \; \forall \; z \in Z.
\end{align}

If we neglect the protected attribute in the clustering and adapt both $W$ and $Z$, we can provide the following upper bound. 

\begin{lemma}[Experimental spurious effect]\label{prop:proposition_se}
Assume that we perform fairness through unawareness and adapt $W$ such that $P(w \mid x_1) - P(w \mid x_0) = 0$. For ${\Vert P(Z \mid x_1) - P(Z \mid x_0) \Vert_1 \le\delta_z}$ and under the assumption of the SFM
    \begin{align}
    \textup{Exp-SE}_{x_0, x_1}(c_k) \leq \sup_{z} P(c_k \mid z) \cdot \delta_z .
\end{align}
\end{lemma}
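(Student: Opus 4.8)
The plan is to mirror the strategy behind Lemma~\ref{prop:proposition_nie}: expand the experimental spurious effect through its SFM identification formula, exploit fairness through unawareness together with the $W$-adaptation to collapse the conditional cluster probability to $P(c_k \mid z)$, and then control the residual discrepancy in $Z$ with an $\ell^1$--$\ell^\infty$ (H\"older-type) inequality.

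First I would write out the interventional and observational distributions under the SFM. Since $Z$ is not a descendant of $X$, the intervention $do(X=x)$ leaves the marginal of $Z$ at $P(z)$, while $W$ and $C$ follow their mechanisms, giving $P((c_k)_x) = \sum_{z,w} P(c_k\mid x,z,w)\,P(w\mid x,z)\,P(z)$, in the same form as the NDE/NIE identification formulas already used in the excerpt. The observational counterpart only replaces $P(z)$ by $P(z\mid x)$, so that
\begin{equation}
\textup{Exp-SE}_{x}(c_k) = \sum_{z} P(c_k\mid x,z)\,\big[P(z\mid x) - P(z)\big],
\end{equation}
where $P(c_k\mid x,z) = \sum_w P(c_k\mid x,z,w)\,P(w\mid x,z)$.

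Next I would use the two assumptions to eliminate the dependence on $x$ inside $P(c_k\mid x,z)$. Fairness through unawareness means $X$ is not an input to $f_C$, so $P(c_k\mid x,z,w) = P(c_k\mid z,w)$; the $W$-adaptation makes $P(w\mid x,z)$ independent of $x$. Together these give $P(c_k\mid x_0,z) = P(c_k\mid x_1,z) = P(c_k\mid z)$. Forming $\textup{Exp-SE}_{x_0,x_1} = \textup{Exp-SE}_{x_1} - \textup{Exp-SE}_{x_0}$, the two $-P(z)$ terms cancel and the common factor $P(c_k\mid z)$ factors out, leaving $\sum_z P(c_k\mid z)\,[P(z\mid x_1) - P(z\mid x_0)]$. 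Bounding the sum by $\sup_z P(c_k\mid z)$ times $\Vert P(Z\mid x_1) - P(Z\mid x_0)\Vert_1 \le \delta_z$ then yields the claim (with the triangle inequality delivering the stated bound on the magnitude).

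The main obstacle is precisely the step that collapses $P(c_k\mid x,z)$ to the $x$-free quantity $P(c_k\mid z)$: the lemma states the adaptation only as the marginal equality $P(w\mid x_1) = P(w\mid x_0)$, whereas the cancellation above requires the conditional equality $P(w\mid x_1,z) = P(w\mid x_0,z)$ for every $z$ (equivalently $W\independent X\mid Z$ after adaptation). I would therefore invoke the conditional optimal-transport map employed in Algorithm~\ref{alg:causally_fair} and in Lemma~\ref{prop:proposition_nie}, for which this equality holds by construction, or state the hypothesis in its conditional form. The fact that the target bound is expressed through $P(c_k\mid z)$ --- a quantity well defined only once the $x$-dependence has been removed --- confirms that this is the intended reading, and the remaining identification and inequality steps are routine once the equality is in place.
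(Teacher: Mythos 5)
Your proof follows essentially the same route as the paper's: identify $\textup{Exp-SE}_{x}(c_k)$ under the SFM as $\sum_z P(c_k \mid x,z)\,[P(z\mid x)-P(z)]$, use fairness through unawareness together with the $W$-adaptation to collapse $P(c_k\mid x,z)$ to $P(c_k\mid z)$ so that the $P(z)$ terms cancel in the difference, and finish with the H\"older-type $\ell_1$--$\ell_\infty$ bound against $\delta_z$. Your closing observation is also correct and is in fact a point the paper's own proof glosses over: it, too, jumps from the marginal equality $P(w\mid x_1)=P(w\mid x_0)$ to $P(c_k\mid x,z)=P(c_k\mid z)$, whereas this step genuinely requires the conditional equality $P(w\mid x_1,z)=P(w\mid x_0,z)$, which is what the conditional transport in Algorithm~\ref{alg:causally_fair} provides, so your conditional reading of the hypothesis is the right one.
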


In summary, the natural direct effect can be removed by ignoring the protected attribute in the clustering, and the natural indirect and experimental spurious effect can be removed by adapting $W$ and $Z$, respectively (see Theorem~\ref{theo:alg}). Notably, pre-processing the data is sufficient for obtaining causally fair clusters, and hence, there is no need for more involved in-processing algorithms. 

\begin{theorem}[Soundness of Algorithm 1]\label{theo:alg}
Under the assumption of the SFM, the clustering assignments resulting from Algorithm~\ref{alg:causally_fair} satisfy the following conditions:
\begin{itemize}
    \item \textbf{NDE}: If the protected attribute \( X \) is not part of the clustering mechanism \( f_C \), then in the infinite sample case
    \begin{equation}
        \textsc{NDE}_{x_0, x_1}(c_k) = 0
    \end{equation}
    \item \textbf{NIE}: Algorithm 1 restricts the NIE as  
    \begin{equation}
        \textup{NIE}_{x_0, x_1}(c_k) \leq \Vert\sup_{w} P(c_k \mid x_0, w, Z) \cdot P(Z) \cdot \delta_w \Vert_1 ,
    \end{equation}
    where the variable adaption is subject to the limit ${\Vert P(W \mid x_1, z) - P(W \mid x_0, z) \Vert_1 \le\delta_w}$.
    \item \textbf{Exp-SE}: The Exp-SE is bounded by 
    \begin{equation}
        \textup{Exp-SE}_{x_0, x_1}(c_k) \leq \sup_{z} P(c_k \mid z) \cdot \delta_z ,
    \end{equation}
    where the variable adaption is subject to the limit \({\Vert P(z \mid x_1) - P(z \mid x_0) \Vert_1 \le\delta_z}\).
\end{itemize}
If $\delta_w$ and $\delta_z$ collapse to zero in the optimal transport procedure in the infinite sample case, then ${\textsc{NDE}_{x_0, x_1}(c_k)= \textup{NIE}_{x_0, x_1}(c_k) = \textup{Exp-SE}_{x_0, x_1}(c_k) = 0}$.
\end{theorem}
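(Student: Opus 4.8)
The plan is to prove the theorem compositionally: each of the three bullet points corresponds to exactly one conditional branch of Algorithm~\ref{alg:causally_fair} and to one of the previously established lemmas, so the work reduces to (i) checking that the operations performed in each branch make the hypotheses of the relevant lemma hold, and (ii) assembling the three conclusions together with the collapse statement. I would therefore first fix the three binary flags to the ``active'' setting (each equal to $1$) and track what the algorithm does to the distributions of $Z$, $W$, and to the input of $f_C$, so that the population-level quantities entering each lemma are exactly the ones produced by the algorithm.

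For the \textbf{NDE} claim, I would note that when the NDE flag is $1$ the algorithm sets $\phi(X,Z,W) \leftarrow f_C(W,Z)$, so $X$ is not an argument of the clustering mechanism. In the infinite-sample (population) regime this yields the conditional independence $X \independent C \mid W, Z$, which is the hypothesis of Lemma~\ref{prop:prop1} (stated there with $Z$ suppressed); invoking it gives $\text{NDE}_{x_0,x_1}(c_k)=0$ directly. For the \textbf{NIE} claim, the relevant branch transports $W \mid x_1, \tau^z(Z)$ onto $W \mid x_0, Z$. I would argue that this transport step is exactly the mechanism driving $P(W\mid x_1, z)$ towards $P(W\mid x_0, z)$, with any residual mismatch quantified by $\Vert P(W\mid x_1, z)-P(W\mid x_0,z)\Vert_1 \le \delta_w$; feeding this bound into the identification expression for the NIE under the SFM and applying Lemma~\ref{prop:proposition_nie} yields the stated upper bound $\Vert\sup_w P(c_k\mid x_0,w,Z)\cdot P(Z)\cdot \delta_w\Vert_1$.

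The \textbf{Exp-SE} part is where I expect the main obstacle, because Lemma~\ref{prop:proposition_se} does not presuppose an isolated $Z$-transport: its hypotheses require \emph{both} that we neglect $X$ (fairness through unawareness) \emph{and} that $W$ has already been adapted so that $P(w\mid x_1)=P(w\mid x_0)$. Hence I cannot treat the three branches as independent; I must verify that, when all flags are active, the algorithm's ordering --- transport $Z$ first to obtain $\tau^z$, then transport $W$ \emph{conditional on the transported} $\tau^z(Z)$, then drop $X$ from $f_C$ --- simultaneously satisfies all three hypotheses of Lemma~\ref{prop:proposition_se}. The delicate step is the conditioning of the $W$-transport on $\tau^z(Z)$ rather than on $Z$: I would check that this ordering is precisely what is needed so that, after the $Z$-step has equalised the $Z$-marginals up to $\delta_z$ and the $W$-step has equalised the conditional $W$-distributions, the composed mechanism matches the lemma's assumptions and the experimental spurious effect is governed by the residual $\Vert P(Z\mid x_1)-P(Z\mid x_0)\Vert_1 \le \delta_z$. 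Applying Lemma~\ref{prop:proposition_se} then gives the bound $\sup_z P(c_k\mid z)\cdot \delta_z$.

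Finally, for the collapse statement I would argue that in the infinite-sample regime optimal transport has access to the true distributions and can couple $W\mid x_1,z$ exactly onto $W\mid x_0,z$ and $Z\mid x_1$ exactly onto $Z\mid x_0$, so the residuals vanish, $\delta_w = \delta_z = 0$. Substituting into the two upper bounds above makes them identically zero, while the NDE is already exactly zero; combining the three gives $\text{NDE}_{x_0,x_1}(c_k)=\text{NIE}_{x_0,x_1}(c_k)=\text{Exp-SE}_{x_0,x_1}(c_k)=0$, as claimed. The only point requiring care here is to confirm that exact transport of the marginals (or conditionals) indeed forces the corresponding identification integrals --- e.g. Equation~(\ref{eq:nie-null}) for the NIE --- to vanish term by term, which follows because each integrand carries the factor $P(w\mid x_1,z)-P(w\mid x_0,z)$ (resp. the analogous $Z$-difference) that is now zero.
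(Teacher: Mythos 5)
Your proposal is correct and takes essentially the same route as the paper's own proof: Theorem~\ref{theo:alg} is assembled directly from Lemma~\ref{prop:prop1} (NDE), Lemma~\ref{prop:proposition_nie} (NIE), and Lemma~\ref{prop:proposition_se} (Exp-SE), with the final collapse statement obtained by setting $\delta_w = \delta_z = 0$ in the two bounds. You are in fact somewhat more careful than the paper, whose proof does not explicitly verify (as you do) that the hypotheses of Lemma~\ref{prop:proposition_se} require the conjunction of fairness through unawareness and the $W$-adaptation, so that the three branches of Algorithm~\ref{alg:causally_fair} must be checked jointly rather than independently.
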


\section{Experiments}

\begin{figure}[t]
    \centering
    \includegraphics[width=0.96\textwidth]{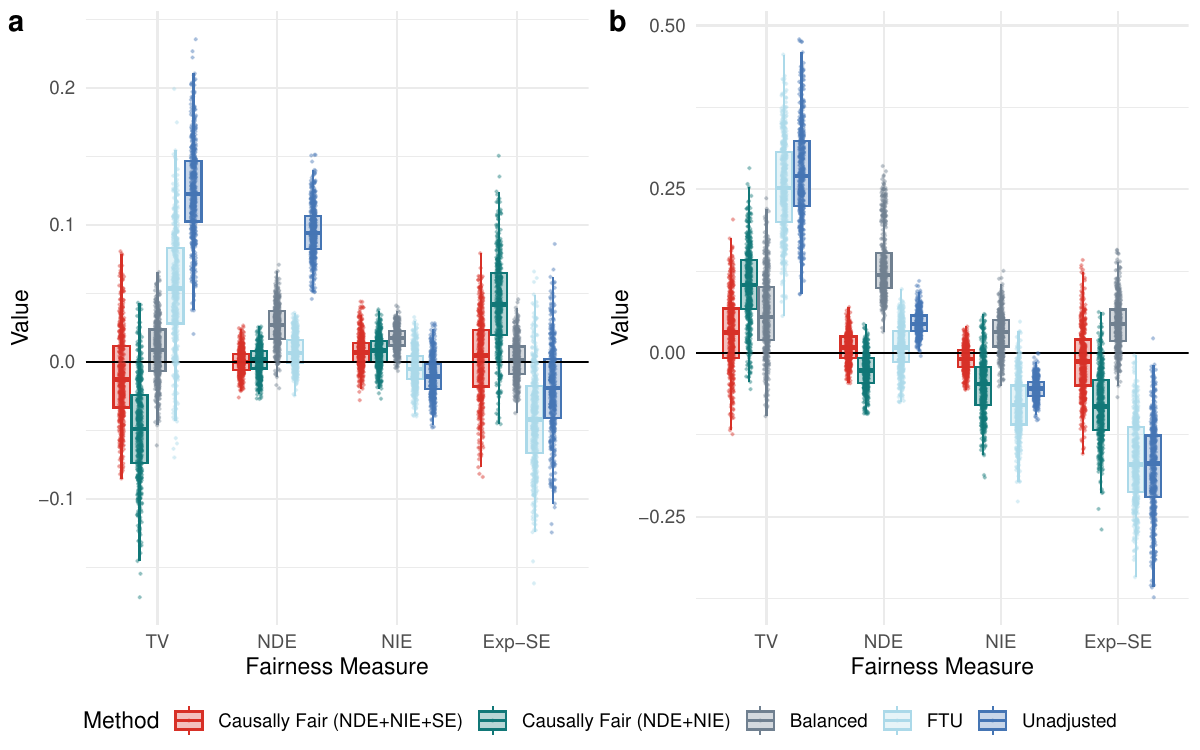}
    \caption{Comparison of fair clustering approaches on the (a) UCI Adult and (b) the COMPAS dataset. Causal Fair: Causally fair clustering, FTU: Fairness through unawareness by neglecting the protected attribute, Unadjusted: Naive clustering without adjustments; TV: total variation, NDE: natural direct effect, NIE: natural indirect effect, and Exp-SE: experimental spurious effect. } 
    \label{fig:all}
\end{figure}

To evaluate the efficiency of our proposed approach, we benchmarked it against several existing clustering methods: balanced clustering, fairness through unawareness clustering, and clustering without any adjustments for fairness, which we refer to as unadjusted clustering. Since Algorithm~\ref{alg:causally_fair} allows one to specify which causal fairness notions are minimised, we implemented two different versions of our approach. One version minimises the NDE, NIE and Exp-SE, which we denote \textit{causally fair (NDE+NIE+SE)}, and the other minimises the NDE and NIE only, which we denote \textit{causally fair (NDE+NIE)}. 

Our experiments were conducted on the Adult dataset from the UCI machine learning repository \citep{lichman2013uci} and the COMPAS dataset \citep{larson2016we}, while the UCI Adult dataset contains demographic data from the 1994 U.S. census and is a common benchmark for income prediction. The COMPAS dataset includes defendant profiles used to assess recidivism risks. Discriminatory biases in the UCI Adult and COMPAS datasets have been well-documented in literature, making them standard benchmarks in both unsupervised and supervised learning \citep{chierichetti2017fair, nabi2018fair, chiappa2019path, larson2016we}. These datasets contain sensitive attributes such as gender and race, which were utilised to measure the fairness of the clustering algorithms. The pre-processed UCI Adult and COMPAS datasets were downloaded from the \textbf{fairadapt} package \citep{plevcko2021fairadapt} and the pre-processing procedure is described in detail in \citep{plevcko2020fair}. The datasets were pre-processed by excluding features such as relationship, final weight, education, capital gain, and capital loss. Furthermore, work class, marital status, and native country were re-categorised into simplified levels, resulting in fewer distinct groups for each variable. 

The SFM model assignments were chosen analogous to the work of \citet{plecko2022causal}, which is based on previous suggestions for the causal structure underlying these datasets \citep{nabi2018fair,chiappa2019path, plevcko2020fair}. Nevertheless, we acknowledge that alternative causal associations could be possible, and inaccuracies in the causal structure could affect the efficiency of our approach. For the Adult dataset within the SFM framework, we chose the confounding variables in $Z$ to include a range of demographic variables, including age, citizenship, and economic region. The Mediator variables in $W$ were chosen to encompass employment and socio-economic factors, including salary, marital status, family size, children, education level, English level, hours worked, weeks worked, occupation, and industry. In the COMPAS dataset within the SFM framework, the confounders in $Z$ include age. The mediator variables in $W$ include various components regarding criminal history and legal circumstances, such as juvenile felony count, juvenile misdemeanour count, other juvenile counts, number of prior offences, degree of charge, and two-year recidivism indicator. As protected attributes, we chose sex and race in the COMPAS and UCI Adult datasets, respectively. We discuss the limitations of our simplification of complex protected attributes in our ethical statement in the Appendix. 

The performance of the algorithms was evaluated based on the following metrics of fairness: total variation (TV), which is a non-causal fairness notion, and the previously introduced causal notions of fairness, including the natural direct effect (NDE), natural indirect effect (NIE), and experimental spurious effect (Exp-SE).

The bias along $X \rightarrow W$ was corrected using optimal transport using the package \textbf{fairadapt}, where quantile regression was executed via random forest quantile regression. Subsequently, the processed data was clustered using the package \textbf{clustMixType} \citep{szepannek2018clustmixtype}. The balanced clustering was performed using the package \textbf{FairMclus} and the fairness measures were calculated using the package \textbf{faircause} \citep{plecko2022causal}. Code implementing our approach and reproducible benchmarks are open-source and publicly available at \url{https://github.com/cbg-ethz/fairClust}. 
All computations were performed in \textbf{R} on a machine with a quad-core Intel core i5 (2.4 GHz) CPU and an Intel iris plus graphics 655 (1536 MB) GPU. 
In the estimation of the causal fairness measures shown in Figure~\ref{fig:all}, we chose 100 inner and five outer bootstrap repetitions. The inner bootstrap repetitions specify the iterations of the fitting procedure, whereas the outer bootstrap repetitions determine the number of bootstrap samples that are taken after the potential outcomes have been obtained from the estimation. This choice represented a trade-off between accuracy and computation time. A higher number of repetitions might increase the accuracy but also result in a longer computation time. We chose a binary clustering, given the high computational cost of the balanced clustering algorithm.

Figure~\ref{fig:all} summarises the benchmark results over both datasets. The unadjusted clustering approach exhibits large discriminatory biases across all fairness measures, including the NDE. In contrast, the fairness through unawareness approach manages to minimise the NDE effectively. Nonetheless, fairness through unawareness displays noticeable effects across the NIE and Exp-SE, which aligns with our expectation as indirect effects are not controlled. The balanced clustering method, despite having a TV near zero, exhibits minor biases in both the NDE and NIE. This illustrates that achieving low TV does not guarantee complete fairness across all measures.

The first version of our causally fair clustering approach, optimising the NDE, NIE and Exp-SE, exhibits minimal biases across all fairness measures. This aligns with our expectation that optimising for causal fairness notions minimises the total variation (see Proposition~\ref{prop:plecko}). In contrast, the second version of our causally fair clustering approach, which only optimises the NDE and NIE, shows minimal biases across the NDE and NIE, but a large Exp-SE. 

These results demonstrate that our novel approach enables us to precisely control which fairness notions are minimised during the clustering process. In practice, this implies we can specify whether confounding variables, such as economic region, can inform the clustering through indirect effects about protected attributes. This may allow for a more nuanced alignment of the clustering process with ethical standards or legal requirements. 

\section{Conclusion}

We introduced a novel causally fair clustering approach as an alternative to existing fair clustering algorithms, which mostly rely on non-causal notions of fairness. Through our experimental analysis of standard datasets, we demonstrated the robustness and efficacy of our approach with respect to causal fairness notions compared to conventional strategies. In particular, our approach allows us to specify which fairness metrics should be optimised, allowing for a more nuanced and targeted optimisation of fairness in unsupervised learning.
In settings where the causal relationships are clearly defined, this may allow for a more thoughtful alignment with ethical principles, legal requirements, and societal needs.
Nevertheless, it is important to recognise that our approach requires a clear understanding of the causal relationships involved. In practice, such relationships may not always be known.

In this work, we optimised for causal fairness notions that investigate both direct and indirect discriminatory effects under the assumptions of the standard fairness model. Our approach lays a practical foundation for causally fair unsupervised learning, which can be extended by exploring other causal fairness notions \citep{barocas2017fairness, castelnovo2022clarification}  in future research.

\bibliography{main}

\clearpage
\newpage
\appendix

\section{Proofs}

\begin{proof}[Proof of Lemma~\ref{prop:prop1}]
    Neglecting the protected attribute in the clustering implies the conditional independence ${X \independent C \mid W}$ and induces the equality 
    \begin{equation*}
        P(c_k \mid w)=P(c_k \mid x_1,w)=P(c_k \mid x_0,w).
    \end{equation*}
    Inserting this equality into the identification expression for the NDE under the assumptions of the SFM proves the proposition
    \begin{align*}
        \textsc{NDE}_{x_0, x_1}(c_k) &= P\big((c_k)_{x_1, W_{x_0}}\big) - P\big((c_k)_{x_0}\big)\\
        &=  \sum_{w} [P(c_k \mid x_1, w) - P(c_k \mid x_0, w)] \cdot P(w \mid x_1)\\
        &= \sum_{w} [P(c_k \mid w) - P(c_k \mid w)] P(w \mid x_1)\\
        &=0 .
    \end{align*}
\end{proof}

\begin{proof}[Proof of Lemma~\ref{prop:proposition_nie}]
Under the condition that ${\Vert P(w \mid x_1, z) - P(w \mid x_0, z) \Vert_1 \le\delta_w}$, we can proof Proposition~\ref{prop:proposition_nie} following Hölder's inequality.
\begin{align*}
    \textup{NIE}_{x_0, x_1}(c_k) & = P\big((c_k)_{x_1, W_{x_0}}\big) - P\big((c_k)_{W_{x_1}}\big)\\
    &= \sum_z P(z) \cdot \Vert P(c_k \mid x_0, z, W) [P(W \mid x_1, z) - P(W \mid x_0, z)]\Vert_1 \\ 
    & \leq \sum_z P(z) \cdot \sup_{w} P(c_k \mid x_0, w) \cdot \Vert P(W \mid x_1, z) - P(W \mid x_0, z) \Vert_1 \\ 
    & \leq \sum_z P(z) \cdot \sup_{w} P(c_k \mid x_0, w) \cdot \delta_w \\
    & = \Vert\sup_{w} P(c_k \mid x_0, w, Z) \cdot P(Z) \cdot \delta_w \Vert_1
\end{align*}
\end{proof}

\begin{proof}[Proof of Lemma~\ref{prop:proposition_se}]

Under the SFM, the experimental spurious effect can be written as
\begin{align*}
    \textup{Exp-SE}_{x_0, x_1}(c_k) = & \textup{Exp-SE}_{x_1}(c_k) - \textup{Exp-SE}_{x_0}(c_k) \\ 
    = & P(c_k \mid x_1) - P\big((c_k)_{x_1}\big)  - P(c_k \mid x_0) - P\big((c_k)_{x_0}\big)\\
    = & \sum_z \big(P(c_k \mid x_1,z) [P(z)-P(z\mid x_1)] \\
    & \quad -P(c_k \mid x_0,z) [P(z)-P(z\mid x_0)]\big) ,
\end{align*}
where we inserted
\begin{align*}
    \text{Exp-SE}_{x}(c_k)&=P(c_k \mid x) - P\big((c_k)_{x}\big) .
\end{align*}

Assuming that we perform fairness through unawareness and adapt $W$ such that ${P(w \mid x_1) - P(w \mid x_0) = 0}$, we can conclude that $P(c_k \mid x, z)=P(c_k \mid z)$. Further, assuming $\Vert P(z \mid x_1) - P(z \mid x_0) \Vert_1 \le\delta_z$, we can proof Proposition~\ref{prop:proposition_se} following Hölder's inequality 
\begin{align*}
    \textup{Exp-SE}_{x_0, x_1}(c_k) = & \sum_z \big(P(c_k \mid x_1,z) [P(z)-P(z\mid x_1)]  -P(c_k \mid x_0,z) [P(z)-P(z\mid x_0)]\big) \\
    = & \sum_z \big(P(c_k \mid z) [P(z)-P(z\mid x_1)] -P(c_k \mid z) [P(z)-P(z\mid x_0)]\big) \\
    = & \sum_z P(c_k \mid z) \big[P(z \mid x_0)-P(z \mid x_1)\big] \\
    = & \Vert P(c_k \mid Z) \big[P(Z \mid x_0)-P(Z \mid x_1)\big] \Vert_1 \\
    \leq & \sup_{z} P(c_k \mid x_0, z)  \cdot \Vert P(Z \mid x_1) - P(Z \mid x_0) \Vert_1 \\ 
    \leq & \sup_{z} P(c_k \mid z) \cdot \delta_z 
\end{align*}

\end{proof}

\begin{proof}[Proof of Theorem~\ref{theo:alg}]
    Assuming the SFM, we proof Theorem~\ref{theo:alg} by applying Lemma~\ref{prop:prop1}, Lemma~\ref{prop:proposition_nie}, and Lemma~\ref{prop:proposition_se}. 
    First, Lemma~\ref{prop:prop1} proofs that the $\textsc{NDE}_{x_0, x_1}(c_k) = 0$ in the infinite sample case. Further, Lemma~\ref{prop:proposition_nie} proofs that the upper bound on the NIE is 
    \begin{equation}
        \textup{NIE}_{x_0, x_1}(c_k) \leq \Vert\sup_{w} P(c_k \mid x_0, w, Z) \cdot P(Z) \cdot \delta_w \Vert_1 ,
    \end{equation}
    where the variable adaption is subject to the limit ${\Vert P(W \mid x_1, z) - P(W \mid x_0, z) \Vert_1 \le\delta_w}$. Finally, Lemma~\ref{prop:proposition_se} proofs the upper bound on the Exp-SE is
    \begin{equation}
        \textup{Exp-SE}_{x_0, x_1}(c_k) \leq \sup_{z} P(c_k \mid z) \cdot \delta_z ,
    \end{equation}
    where the variable adaption is subject to the limit \({\Vert P(z \mid x_1) - P(z \mid x_0) \Vert_1 \le\delta_z}\).
    
    If $\delta_w=\delta_z=0$ in the optimal transport procedure in the infinite sample case, then ${\textsc{NDE}_{x_0, x_1}(c_k)= \textup{NIE}_{x_0, x_1}(c_k) = \textup{Exp-SE}_{x_0, x_1}(c_k) = 0}$.
\end{proof}
 
\section{Ethical Statement}

We would like to point out the following ethical considerations:

\textbf{Simplification of Complex Attributes:} We recognize and affirm the complexity of sex and gender, which can encompass a broad spectrum of identities and expressions. However, due to the constraints of the UCI Adult and COMPAS datasets, we are utilizing a binary representation of sex. This simplification is not intended to overlook or invalidate the diverse realities of sex and gender but is a practical limitation of the analysed data. 

\textbf{Algorithmic Fairness:} Our research aims to enhance fairness in unsupervised learning by addressing both direct and indirect discriminatory effects, but we acknowledge the inherent challenges in capturing all facets of fairness and bias.

\textbf{Causal Relationships:} Our approach requires a clear understanding of the causal relationships involved. In practice, these relationships may not always be known, and erroneous assumptions could lead to unintended consequences.

\textbf{Wider Societal Implications:} We recognize that our work may influence decision-making processes in various domains, such as employment, healthcare, or finance.


\end{document}